\documentclass{article}
\PassOptionsToPackage{numbers}{natbib}
\usepackage[final]{neurips_2025}

\usepackage[utf8]{inputenc}
\usepackage[T1]{fontenc}
\usepackage{microtype}

\usepackage{amsmath,amsthm,amssymb}
\usepackage{graphicx}
\usepackage{booktabs}
\usepackage{wrapfig}
\usepackage{algorithm}
\usepackage{algpseudocode}
\usepackage{xcolor}

\usepackage{listings}
\usepackage[skins,breakable]{tcolorbox}
\usepackage{url}
\usepackage[hidelinks]{hyperref}

\theoremstyle{plain}
\newtheorem{theorem}{Theorem}[section]
\newtheorem{corollary}[theorem]{Corollary}

\title{Progress Reward Model for Reinforcement Learning \\via Large Language Models}
\author{%
  Xiuhui Zhang, \ Ning Gao, \ Xingyu Jiang, \\
  \textbf{Yihui Chen, \ Yuheng Pan, \ Mohan Zhang, \ Yue Deng} \\[1ex]
  Beihang University \\
  37 Xueyuan Road, Haidian District, Beijing \\
  \texttt{\{zhangxiuhui, gaoning\_ai, jxy33zrhd\}@buaa.edu.cn} \\
  \texttt{\{21375399, fzxl2003, zmh666, ydeng\}@buaa.edu.cn}
}

\begin{document}
\maketitle

\begin{abstract}
Traditional reinforcement learning (RL) algorithms face significant limitations in handling long-term tasks with sparse rewards.
Recent advancements have leveraged large language models (LLMs) to enhance RL by utilizing their world knowledge for task planning and reward generation.
However, planning-based approaches often depend on pre-defined skill libraries and fail to optimize low-level control policies, while reward-based methods require extensive human feedback or exhaustive searching due to the complexity of tasks.
In this paper, we propose the Progress Reward Model for RL (PRM4RL), a novel framework that integrates task planning and dense reward to enhance RL.
For high-level planning, a complex task is decomposed into a series of simple manageable subtasks, with a subtask-oriented, fine-grained progress function designed to monitor task execution progress.
For low-level reward generation, inspired by potential-based reward shaping, we use the progress function to construct a Progress Reward Model (PRM), providing theoretically grounded optimality and convergence guarantees, thereby enabling effective policy optimization.
Experimental results on robotics control tasks demonstrate that our approach outperforms both LLM-based planning and reward methods, achieving state-of-the-art performance.
Code is available at \url{https://github.com/deng-ai-lab/PRM4RL}.
\end{abstract}

\section{Introduction}
Reinforcement learning (RL) has shown exceptional potential in solving sequential decision-making problems.
However, in real-world scenarios involving complex, long-term tasks with sparse rewards, agents face significant challenges in planning and decision-making due to insufficient feedback signals\cite{sutton1998reinforcement, dulac2021challenges}.
Traditional RL frameworks address these challenges primarily by focusing on two strategies: task decomposition and reward shaping.
Hierarchical RL\cite{nachum2018data, pateria2021hierarchical} simplifies decision-making by autonomously decomposing complex tasks into hierarchical subtasks, while reward shaping techniques provide dense training signals through the reconstruction of reward functions, such as inverse RL\cite{arora2021survey, ziebart2008maximum, finn2016guided} and intrinsic reward mechanisms\cite{wan2023deir}.
Despite their advantages, both approaches heavily rely on expert data for priors\cite{penzkofer2024int} or supervision\cite{pan2018human}, and the large-scale acquisition of such data is often challenging, limiting the scalability and applicability of these methods.

Recent advancements in large language model (LLM) research have made significant progress in areas such as dialogue\cite{yi2024survey, brooks2023large}, sequential decision-making\cite{zhao2024expel, zhai2025enhancing}, and coding\cite{roziere2023code, yan2023codescope, yang2024qwen2}.
By leveraging the extensive and diverse human knowledge acquired during the pre-training phase, LLMs possess a broad range of world knowledge and emergent reasoning abilities\cite{yao2023react, guo2025deepseek}.
Consequently, LLM-Augmented RL has emerged as a promising approach to overcome the challenges faced by traditional RL methods\cite{cao2024surveylargelanguagemodelenhanced}.
To tackle the issues of long-term planning and sparse rewards, previous LLM-based approaches have evolved into two main paradigms: LLM as a planner\cite{brohan2023can, dalal2024plan, du2023guiding} and LLM as a rewarder\cite{xie2024text2reward, ma2023eureka, hazra2024revolve}, as illustrated in Figure \ref{fig:overview}.
As a planner, LLM decomposes complex, long-term tasks into a series of short-term, goal-oriented subtasks, incorporating subtask priors into the decision-making process.
As a rewarder, LLM constructs a dense reward function for sparse-reward tasks, providing guidance to the underlying policy based on the meaningful priors encoded within the LLM.

\begin{figure}[h]
  \centering
  \includegraphics[width=\textwidth]{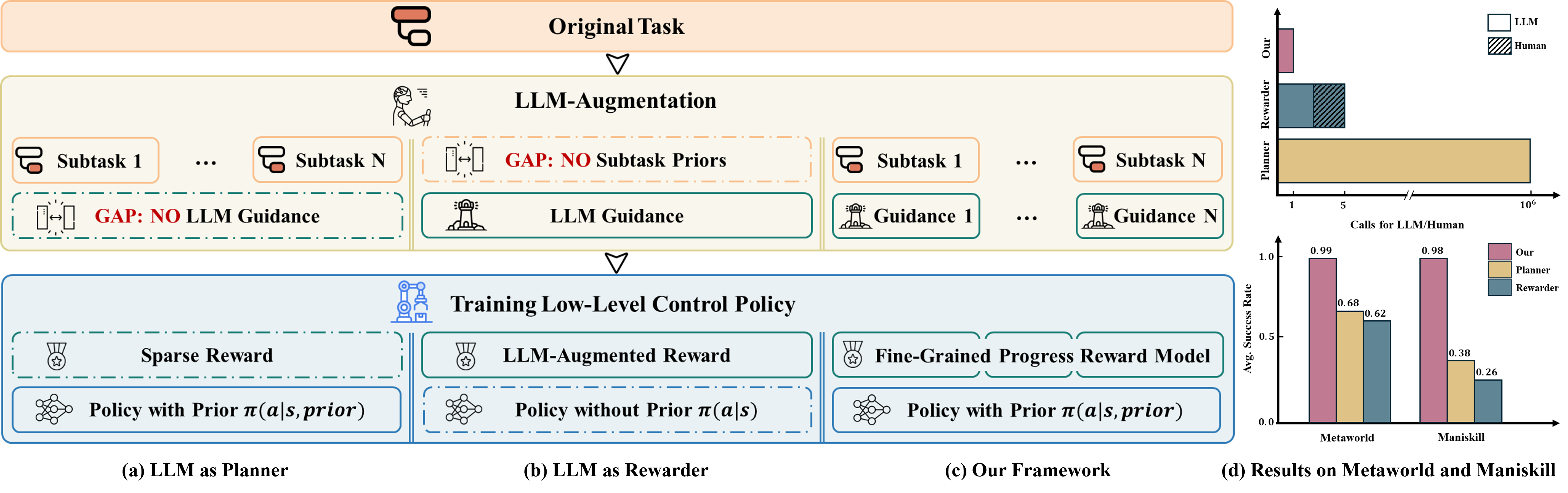}
  \caption{Quick view. (a), (b) and (c) shows the structual similarities and differences between LLM-Augmented methods. (d) is the average evaluation results on Metaworld\cite{yu2019meta} and Maniskill\cite{taomaniskill3}.}
  \label{fig:overview}
\end{figure}

However, LLM as a planner primarily focuses on high-level planning.
In the absence of LLM-augmented dense reward guidance, task decomposition alone—relying solely on natural-language task priors—is insufficient to support the effective learning of subtasks\cite{prasad2023adapt, dalal2024plan}.
Previous approaches have alleviated this limitation by constructing pre-trained policy libraries\cite{brohan2023can} or incorporating traditional motion planning to link subtasks\cite{dalal2024plan}, yet these methods lack flexibility and efficiency due to their non-end-to-end nature.
On the other hand, LLM as a rewarder provides low-level guidance.
However, without subtask decomposition, constructing a reward function for an entire long-term, complex task remains a significant challenge for LLMs\cite{sarukkai2024automatedrewardsllmgeneratedprogress, cao2024drlc}.
Previous methods have enhanced the quality of LLM-generated reward functions by incorporating human feedback\cite{xie2024text2reward} or evolutionary search algorithms\cite{ma2023eureka, hazra2024revolve}, but these approaches still require extensive search and significant human expert assistance.
As shown in Figures \ref{fig:overview}a and \ref{fig:overview}b, both of these approaches focus on a single aspect of RL, neglecting the importance of integrating both high-level planning and low-level reward shaping.

To address the challenges of long-term planning and sparse rewards in reinforcement learning more efficiently, we propose a novel framework, Progress Reward Model for Reinforcement Learning (PRM4RL), which integrates both high-level long-term planning and low-level dense rewards to provide a comprehensive solution, as illustrated in Figure \ref{fig:overview}c.
In high-level planning, the task is decomposed into subtasks, each with termination conditions and fine-grained progress evaluations.
For low-level reward generation, we utilize the progress to construct an efficient dense reward signal with theoretically grounded optimality and convergence guarantees.
In practice, our pipeline decomposes a complex task into a series of manageable, goal-oriented subtasks, each associated with a natural language subtask prior.
For each subtask, we build a fine-grained progress function that dynamically tracks the execution of the task.
Inspired by the potential field theorem, we regard the progress function as a potential function and design a Progress Reward Model with optimality and convergence guarantees.
The evaluation results in Metaworld\cite{yu2019meta} and Maniskill\cite{taomaniskill3} as shown in Figure \ref{fig:overview}d demonstrate the effectiveness of our proposed framework, with significant improvements in success rate and reduced LLM calls, outperforming existing methods in both performance and efficiency.

\section{Related Works}

\subsection{LLM for Planning}\label{sec:related-planning}
Large language models (LLMs), enriched with diverse world knowledge and human priors during their pretraining phase,  possess the ability to understanding and reasoning\cite{yao2023react, guo2025deepseek, yang2024qwen2, yan2023codescope, yi2024survey}.
When provided with relevant task information, LLMs are able to decompose complex, long-term tasks into simpler, short-term tasks, each focused on specific objectives\cite{khot2023decomposedpromptingmodularapproach, yuan2023tasklamaprobingcomplextask}.
Numerous approaches have leveraged Task and Motion Planning (TAMP) to realize such task decomposition in robotics scenes\cite{chen2023nl2tl, chen2024autotamp}.
However, when using RL for low-level control strategies, a significant gap arises between high-level planning and low-level control—how to efficiently leverage the natural language plans generated by LLMs to guide policy training?
SayCan\cite{brohan2023can} addresses this challenge by utilizing a set of pretrained policies, where the LLM simply selects the appropriate policy based on the plans. Plan-Seq-Learn\cite{dalal2024plan} uses traditional motion planning to link subtasks, alleviating the burden on RL policies.
In text-based environments, ELLM\cite{du2023guiding} trains RL policies by describing state transition in natural language and computing the similarity between the executed actions and the plan, using this similarity as a reward signal. However, this approach is computationally expensive and not suitable for robotics tasks.
In contrast, our method constructs an efficient and dense reward signal through the Progress Reward Model (PRM), effectively bridging the gap between high-level planning and low-level policy training.

\subsection{LLM for Reward Generation}\label{sec:related-rewards}
Reward functions are the primary mechanism to guide RL training\cite{devidze2024informativenessrewardfunctionsreinforcement, eschmann2021reward}, leading to numerous approaches that utilize LLMs to generate executable reward functions that provide dense rewards. Text2Reward\cite{xie2024text2reward} uses LLMs to generate executable python functions based on a goal described in natural language, with human involvement in the loop.
Eureka\cite{ma2023eureka} extends this paradigm by introducing evolutionary search and automatic reward refinement through LLMs.
REvolve\cite{hazra2024revolve} translates implicit human knowledge into explicit reward functions for training RL with human feedback.
However, these methods treat the task as a whole and do not account for the potential stage-wise features of complex tasks.
As a result, the generation process of reward function tend to be highly complex and often require human feedback\cite{xie2024text2reward, hazra2024revolve} or extensive search\cite{ma2023eureka}.
In contrast, our approach simplifies the reward generation process by decomposing the task into short-term, simple subtasks, providing more specific and clear guidance for each stage of the task.

\section{Preliminary}
In standard reinforcement learning (RL)\cite{sutton1998reinforcement} tasks, the problem is typically modeled as a Markov Decision Process (MDP), denoted by \( \mathcal{M} = \langle \mathcal{S}, \mathcal{A}, \mathcal{T}, R, \gamma, \rho_0 \rangle \). Here, \( \mathcal{S} \) and \( \mathcal{A} \) represent the state and action spaces, respectively, \( \gamma \) is the discount factor, and \( \rho_0 \) is the distribution of initial states. The dynamics of environment are captured by the transition function \( \mathcal{T}: \mathcal{S} \times \mathcal{A} \times \mathcal{S} \to [0, 1] \), which defines the probability of transitioning from one state to another given a specific action. The reward function \( R: \mathcal{S} \times \mathcal{A} \to \mathbb{R} \) quantifies the immediate reward received for taking an action in a particular state.
A trajectory \( \tau = (s_0, a_0, r_0, \dots, s_T, a_T, r_T) \) describes the sequence of states, actions, and rewards encountered by an agent following a decision-making policy \( \pi \). At each time step \( t \), the agent selects an action \( a_t \) according to \( a_t \sim \pi(\cdot | s_t) \), and the environment responds by transitioning to the next state \( s_{t+1} \sim \mathcal{T}(\cdot | s_t, a_t) \).
The state-action value function (Q-function) $Q^\pi(s, a)$ represents the expected cumulative return when taking action $a$ in state $s$ and subsequently following policy $\pi$:
\begin{equation}
    Q^\pi(s, a) = \mathbb{E}_{\tau \sim p(\tau|\pi)} \left[ \sum_{k=0}^{T} \gamma^k R(s_{t+k}, a_{t+k}) \,\bigg|\, s_t = s, a_t = a \right]
\end{equation}
The objective in RL is for the agent to learn a policy \( \pi : \mathcal{S} \to \mathcal{A} \) that maximizes the expected cumulative return defined as:
\begin{equation}
    J(\pi) = \mathbb{E}_{\tau \sim p(\tau|\pi)} \left[ \sum_{t=0}^{T} \gamma^t R(s_t, a_t) \right]
\end{equation}

\section{Method}
Our framework, as illustrated in Figure \ref{fig:framework}, bridges the gap between high-level planning and low-level training.
We begin by decomposing a complex long-term task into a series of short-term subtasks, with a determination function $\Psi$ to classify the current subtask (Section \ref{sec:subtask-decomposition}).
Next, we construct a potential-based Progress Reward Model (PRM) derived from the progress function $\Phi$, which generates effective reward signals that are proved to have convergence and optimality guarantees (Section \ref{sec:progress-reward}).
Finally, the policy is augmented using both the PRM reward and the subtask priors (Section \ref{sec:rl-augmentation}).

\textbf{LLM usage pattern.}\label{sec:llm-usage}
In our framework, we prompt an LLM to decompose the tasks and generate the determination function $\Psi$ and progress function $\Phi$.
Considering that the intermediate inference steps and the required final response are primarily represented as Python code (as shown later in Section \ref{sec:subtask-decomposition}, \ref{sec:progress-reward}), inspired from PAL\cite{gao2023pal}, we require the LLM to generate programs as intermediate steps, as shown in Figure \ref{fig:framework}a.
Specifically, we instruct the LLM to generate natural language reasoning steps using comment syntax (e.g., `\#' in Python), while the code itself is generated normally.
Through this program-aided Chain-of-Thought reasoning, the LLM can generate both the plan and progress function in a single interaction.

\begin{figure}[t]
  \centering
  \includegraphics[width=\textwidth]{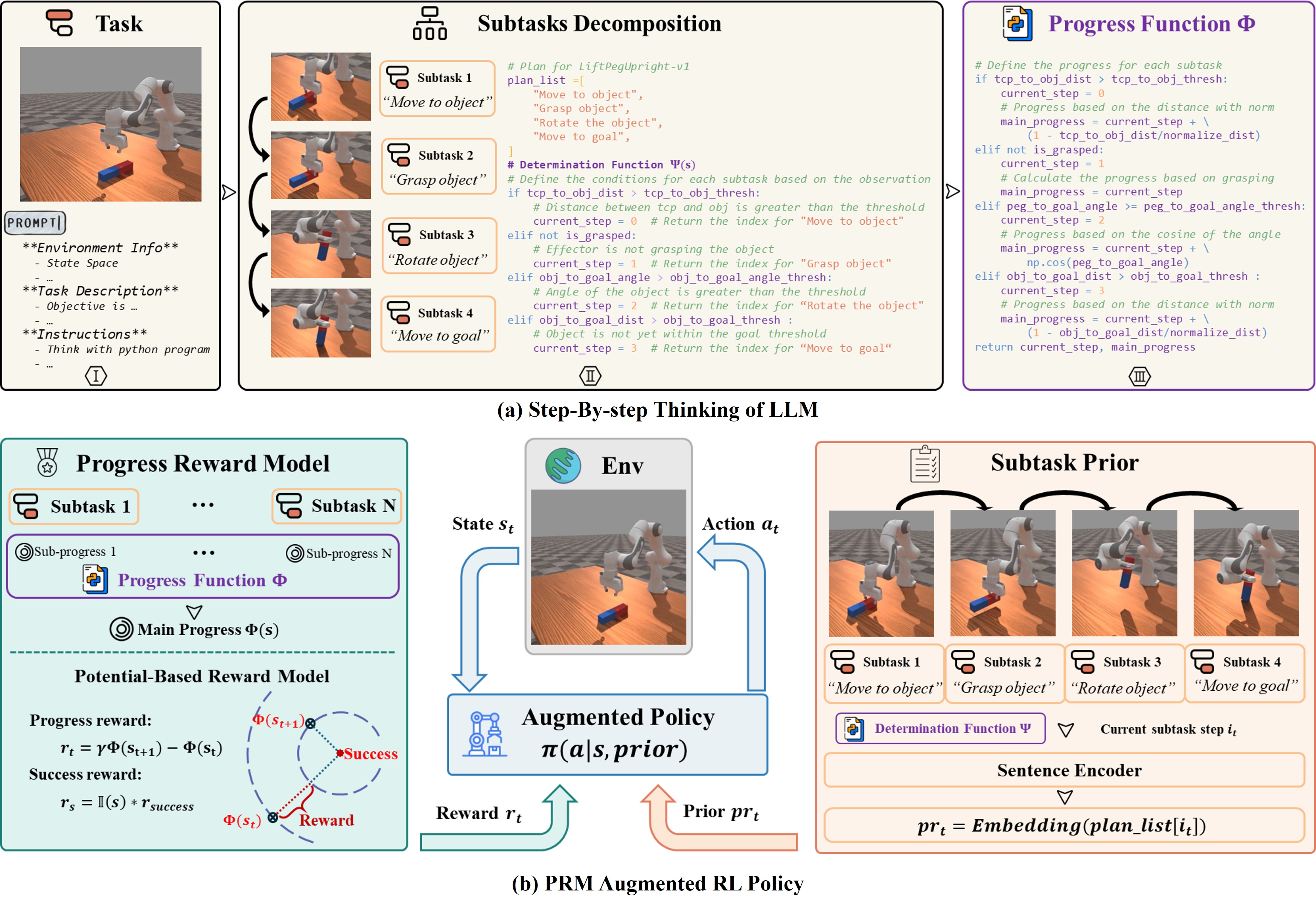}
  \caption{An overview of our framework. (a) We prompt the LLM using a Chain-of-Thought approach in Python code format. The reasoning process is provided as comments (e.g., using \#), while the intermediate steps and final response are represented as executable Python statements. (b) Our framework augments RL in two ways: subtask prior and the PRM rewards.}
  \label{fig:framework}
\end{figure}

\subsection{Subtask Decomposition}\label{sec:subtask-decomposition}
Decomposing a long-term task into a series of short-term, manageable tasks provides the policy with subtask priors and alleviates the complexity of constructing the progress function.
Existing methods\cite{brohan2023can, dalal2024plan} have demonstrated that, given sufficient environment-related information, LLMs are capable of task decomposition.
In our framework, we leverage an LLM to break down the task into a series of subtasks, each contributing to the success of the overall task.
We construct a Pythonic representation prompt that includes semantically rich contextual information such as state space, environmental descriptions, and task descriptions (detailed in Appendix \ref{app:prompts}), providing a high-dimensional abstraction of environment-specific information.

While decomposing the subtasks, the LLM is also required to generate a natural language form \textit{subtask prior}, which is a description of the corresponding subtask.
To ensure consistency and generalization across the robotics control domain, we designed a fixed pattern for the subtask priors.
Specifically, we instruct the LLM to output each subtask prior in a "verb + noun" format (e.g., `Move to object' or `Grasp object'), significantly enhancing the generalization ability of the trained policy.
The expected output from the LLM is a Python list, called \textit{plan\_list}, as shown in Figure \ref{fig:framework}a(II), where each element corresponds to a subtask prior.

However, guiding the policy with subtask priors presents another challenge: real-time tracking of which subtask the policy is currently executing.
Previous approaches require calling the LLM at each timestep, either to regenerate a real-time plan\cite{du2023guiding} or to check the completion of ongoing subtasks\cite{dalal2024plan}, leading to significant time and computational overhead.

Our key insight is that, with a thorough understanding of the state space, we can determine which subtask the policy is in based solely on current state.
For the previously generated \textit{plan\_list}, we instruct the LLM to identify the completion condition for each subtask and locate the relevant features in the state space.
We then construct a subtask determination function $\Psi(s)$, where given state $s$, it will return the index of current subtask in \textit{plan\_list}.
As shown in Figure \ref{fig:framework}a(II), the LLM ultimately generates an efficient, low-cost logical chain that ensures rigorous subtask determination.

\begin{algorithm}[t]
      \caption{PRM4RL}
      \begin{algorithmic}
      \State \textbf{Require:} pythonic LLM prompt $prompt$, sentence encoder $\mathbf{E}$, policy $\pi_\theta $, MDP \( \mathcal{M} = \langle \mathcal{S}, \mathcal{A}, \mathcal{T}, R, \gamma, \rho_0 \rangle \), RL optimization algorithm $Optimize\_Algo$
        \State \textcolor{teal}{// Request LLM}
        \State $plan\_list, \Psi(s), \Phi(s) = LLM(prompt)$

        \State \textcolor{teal}{// Augment State Space}
        \State $\mathcal{P}: prior = \mathbf{E}(plan\_list[\Psi(s)])$

        \State \textcolor{teal}{// Augment Reward Function}
        \State $R^{PRM} = \gamma \cdot \Phi(s_{t+1}) - \Phi(s_{t}) + \mathbb{I}(s_{t+1})\cdot r_{bonus}$

        \State \textcolor{teal}{// Optimizing policy under augmented MDP}
        \State $\mathcal{M'} = \langle \mathcal{S}+\mathcal{P}, \mathcal{A}, \mathcal{T}, R^{PRM}, \gamma, \rho_0 \rangle$
        \State $\theta^* = Optimize\_Algo(M')$
      \end{algorithmic}
\end{algorithm}

\subsection{Potential-Based Progress Reward Model}\label{sec:progress-reward}
Our framework aims to construct a dense and efficient reward signal to guide the training of low-level policies.
While previous approaches\cite{xie2024text2reward, ma2023eureka, hazra2024revolve} directly design a reward function, we choose to model the progress of task execution instead. This approach is more task-relevant and provides stronger theoretical guarantees on optimality and convergence as proved later.

With the decomposition of subtasks, the focus of our progress function shifts from a long-term, complex task to short-term, simpler subtasks.
This transformation simplifies task comprehension and the design of usable metrics.
Based on the established subtask determination logic $\Psi$, we instruct the LLM to further define a progress function for each subtask, and then integrating these fine-grained sub-progress indicators into an overall evaluation of the task progress $\Phi(s)$, as shown in Figure \ref{fig:framework}a(III).
The progress function offers a precise and efficient method for monitoring overall task execution.

Inspired by potential-based framework, we treat the progress function $\Phi(s)$ as a form of potential and construct the Progress Reward Model (PRM) using the temporal difference form:
\begin{equation}
    R_t^{PRM} = \gamma \cdot \Phi(s_{t+1}) - \Phi(s_{t}) + \mathbb{I}(s_{t+1})\cdot r_{bonus}
\end{equation}
where $R_t^{PRM}$ is the dense reward given by PRM at timestep $t$, $\gamma$ is the discount factor, and the indicator function $\mathbb{I}$ equals 1 when $s_{t+1}$ meets the success conditions and 0 otherwise. $\Phi(s_{t+1})$ and $\Phi(s_{t})$ represent the progress at timesteps $t+1$ and $t$, respectively.
The PRM reward can be regarded as consisting of two parts: a sparse success reward and a potential-based reward shaping term. Following \cite{ng1999policy}, we present the following theorem:

\begin{theorem}[Policy Invariance]\label{thm:policy-invariance}
Define an MDP \( M = \langle \mathcal{S}, \mathcal{A}, \mathcal{T}, R, \gamma, \rho_0 \rangle \), where $R_t = \mathbb{I}(s_{t+1})\cdot r_{bonus}$ represents the sparse reward when the task is considered successful.
Giving a potential-based reward shaping function as follows:
\begin{equation*}
    F =  \gamma \cdot \Phi(s_{t+1}) - \Phi(s_{t})
\end{equation*}
Let another MDP \( M' = \langle \mathcal{S}, \mathcal{A}, \mathcal{T}, R+F, \gamma, \rho_0 \rangle \), then every optimal policy in $M'$ will also be an optimal policy in $M$ and vice versa.
\end{theorem}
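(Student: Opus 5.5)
The plan is to reproduce the argument of Ng, Harada and Russell \cite{ng1999policy}. I will show that the optimal $Q$-functions of $M$ and $M'$ differ only by the state-dependent offset $\Phi(s)$. Since the offset does not depend on the action, the two MDPs have the same greedy actions, and therefore the same optimal policies. I work with the reward written as $R(s,a,s')$ with $s'\sim\mathcal{T}(\cdot\mid s,a)$. The sparse reward $\mathbb{I}(s')\cdot r_{bonus}$ depends on the next state, so this three-argument form is the natural one. I also assume $\Phi$ is bounded, which holds for a progress function, so all discounted sums converge.

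\textbf{Step 1 (Bellman equations).} The optimal $Q$-function of $M$ satisfies
\begin{equation*}
Q^*_M(s,a)=\mathbb{E}_{s'}\bigl[R(s,a,s')+\gamma\max_{a'}Q^*_M(s',a')\bigr].
\end{equation*}
I subtract $\Phi(s)$ from both sides and add and subtract $\gamma\Phi(s')$ inside the expectation. This gives
\begin{equation*}
Q^*_M(s,a)-\Phi(s)=\mathbb{E}_{s'}\bigl[R(s,a,s')+\gamma\Phi(s')-\Phi(s)+\gamma\max_{a'}\bigl(Q^*_M(s',a')-\Phi(s')\bigr)\bigr].
\end{equation*}
Define $\hat Q(s,a)=Q^*_M(s,a)-\Phi(s)$. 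The display says that $\hat Q$ satisfies the Bellman optimality equation of $M'$, whose reward is $R+F$ with $F=\gamma\Phi(s')-\Phi(s)$. The step that pulls $\Phi(s')$ out of the maximum is valid because $\Phi(s')$ does not depend on $a'$.

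\textbf{Step 2 (uniqueness).} For $\gamma<1$ and bounded rewards, the Bellman optimality operator is a $\gamma$-contraction in the sup norm, so its fixed point is unique. Hence $Q^*_{M'}=Q^*_M-\Phi$. Then for every $s$,
\begin{equation*}
\arg\max_a Q^*_{M'}(s,a)=\arg\max_a Q^*_M(s,a).
\end{equation*}
A policy is optimal exactly when it is supported on greedy actions with respect to the optimal $Q$-function. It follows that every optimal policy of $M'$ is optimal in $M$. The converse follows by the same argument, since $M$ is obtained from $M'$ by shaping with $-\Phi$.

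\textbf{Main obstacle.} The delicate step is reconciling this with the finite-horizon objective of the Preliminaries, where the sum runs to $T$ and episodes may end on success. A trajectory-level telescoping check shows what is needed. The shaped return of a trajectory $s_0,\dots,s_T$ is
\begin{equation*}
\sum_{t=0}^{T-1}\gamma^t F_t=\gamma^T\Phi(s_T)-\Phi(s_0),
\end{equation*}
and $-\Phi(s_0)$ does not depend on the policy. The term $\gamma^T\Phi(s_T)$, however, could depend on the policy and bias it. I would handle this in one of two ways. The first is to treat terminal states as absorbing with $\Phi\equiv 0$ there, following Ng et al. The second is to cast the finite-horizon problem as an infinite-horizon MDP with an absorbing zero-reward state. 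Either way the theorem holds as stated, and Steps 1 and 2 go through verbatim.
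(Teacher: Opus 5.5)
Your proposal is correct and follows the same route as the paper's proof, which is the Ng--Harada--Russell argument: show that $Q^*_M-\Phi$ satisfies the Bellman optimality equation of $M'$, invoke uniqueness of the fixed point, note that the action-independent offset $\Phi(s)$ leaves the $\arg\max$ unchanged, and get the converse by shaping with $-F$. Your contraction argument for uniqueness and your treatment of terminal states as absorbing with $\Phi=0$ spell out what the paper's proof only gestures at in its brief undiscounted-case remark about $\Phi(s_0)=0$.
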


The proof of Theorem \ref{thm:policy-invariance} is provided in Appendix \ref{thm:policy-invariance-proof}. Theorem \ref{thm:policy-invariance} indicates that $R^{PRM}$  does not distort the learning of the policy by introducing "deceptive" or "short-sighted" behaviors due to reward shaping.
The optimal policy in $M'$ remains focused on maximizing the original task objective, i.e. maximize success rate.
Based on Theorem \ref{thm:policy-invariance}, we have the following corollary:

\begin{corollary}[Convergence Guarantee]
The introduction of potential-based reward shaping will not affect the convergence of reinforcement learning algorithms.
\end{corollary}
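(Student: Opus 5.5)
The corollary is vague, so the first step is to fix what it means. I would read ``convergence'' as follows: a standard RL algorithm with known convergence guarantees (tabular Q-learning, value iteration, or a policy-gradient method) converges when run on $M'$ whenever it converges on $M$. Moreover, the limit it reaches in $M'$ is an optimal policy of $M$.

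The core step is the same telescoping identity that underlies Theorem~\ref{thm:policy-invariance}:
\begin{equation*}
Q^{*}_{M'}(s,a) = Q^{*}_{M}(s,a) - \Phi(s), \qquad V^{*}_{M'}(s) = V^{*}_{M}(s) - \Phi(s).
\end{equation*}
This is verified by summing the discounted shaping terms $\sum_k \gamma^k(\gamma\Phi(s_{t+k+1})-\Phi(s_{t+k}))$ along a trajectory, which telescope to $-\Phi(s_t)$. For a finite horizon one assumes $\Phi=0$ at terminal states; for an infinite horizon one uses boundedness of $\Phi$ together with $\gamma<1$.

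Given this identity, I would argue convergence in three steps.

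First, $M'$ is again an MDP with the same $\mathcal{S},\mathcal{A},\mathcal{T},\gamma$. Its reward $R+F$ is bounded, since $\Phi$ is bounded: an LLM-written progress function on a bounded range, say $[0,1]$. The Bellman optimality operator of $M'$ is therefore a $\gamma$-contraction in sup-norm. As a result, value iteration and Q-learning under the usual Robbins--Monro step sizes and sufficient exploration converge to the unique fixed point $Q^*_{M'}$. Stated this way, convergence depends only on the bounded-reward and $\gamma<1$ hypotheses, which shaping preserves.

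Second, by the identity above, $\arg\max_a Q^*_{M'}(s,a)=\arg\max_a Q^*_M(s,a)$. The greedy policy extracted at the limit is then optimal in $M$, which is the content of Theorem~\ref{thm:policy-invariance}. In other words, the algorithm converges, and it converges to the correct policy.

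Third, for policy-gradient methods I would note that $J_{M'}(\pi)=J_M(\pi)-\mathbb{E}_{s_0\sim\rho_0}[\Phi(s_0)]$. The two objectives differ by a constant independent of $\pi$, so their gradients coincide in expectation, and any stationary-point convergence guarantee carries over unchanged.

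I expect the main obstacle to be the infinite-horizon and bounded-$\Phi$ bookkeeping, that is, making sure the telescoping tail $\gamma^{T}\Phi(s_T)$ vanishes or is absorbed. The same care is needed for the sparse success bonus at termination. I would handle this by explicitly assuming $\Phi$ is bounded and treating terminal states as absorbing with $\Phi=0$. This is the standard assumption in Ng et al.\ \cite{ng1999policy}, and I would state it as a hypothesis of the corollary.
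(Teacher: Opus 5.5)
Your proposal is correct and follows the paper's route. The paper gives no standalone proof of this corollary and treats it as an immediate consequence of Theorem~\ref{thm:policy-invariance}, namely the identity $Q^*_{M'}=Q^*_M-\Phi$ and the resulting invariance of $\arg\max_a$, which is exactly your second step. Your other pieces make precise what the paper leaves implicit: the $\gamma$-contraction argument for the bounded shaped reward, the constant shift $J_{M'}(\pi)=J_M(\pi)-\mathbb{E}_{s_0\sim\rho_0}[\Phi(s_0)]$ for policy-gradient methods, and the explicit hypotheses (bounded $\Phi$, and $\Phi=0$ at absorbing terminal states, which the paper only touches on in its undiscounted-case remark in the appendix).
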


Since the PRM reward has a theoretical convergence guarantee, we proceed to demonstrate its efficiency in convergence, following the work of \cite{Wiewiora_2003} (Detailed proof provided in  Appendix \ref{thm:convergence-efficiency-proof}).

\begin{theorem}[Convergence Efficiency]\label{thm:convergence-efficiency}
Consider two MDPs: \( M = \langle \mathcal{S}, \mathcal{A}, \mathcal{T}, R+F, \gamma, \rho_0 \rangle \) with $Q$-function $Q$ and \( M' = \langle \mathcal{S}, \mathcal{A}, \mathcal{T}, R, \gamma, \rho_0 \rangle \). If we initialize the $Q$-function in $M'$ as follows:
\begin{equation*}
    Q'_0(s, a) = Q_0(s, a) + \Phi(s)
\end{equation*}
Then for all timesteps $t \geq 0$ and $(s,a) \in \mathcal{S} \times \mathcal{A}$:
\begin{equation*}
    Q'_t(s,a) = Q_t(s,a) + \Phi(s)
\end{equation*}
where $Q_t$ and $Q'_t$ are the Q-functions of MDP $M$ and $M'$ respectively.
\end{theorem}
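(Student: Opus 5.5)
We argue by induction on the number of updates $t$, in the style of \cite{Wiewiora_2003}. The key is to fix the learning protocol. Both learners run the same Q-learning-style update: same step sizes $\alpha_t$, same observed sequence of experiences $(s_t,a_t,s_{t+1})$, and the same action-selection rule. In the shaped MDP $M$ the learner receives reward $R+F$, with $F=\gamma\Phi(s_{t+1})-\Phi(s_t)$, and its estimate $Q_t$ is shifted by $\Phi$ relative to $Q'_t$ in the unshaped MDP $M'$. The initialization $Q'_0(s,a)=Q_0(s,a)+\Phi(s)$ is exactly the base case $t=0$.

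\textbf{Inductive step.} Assume $Q'_t(s,a)=Q_t(s,a)+\Phi(s)$ for all $(s,a)$. Only the pair $(s_t,a_t)$ is updated at step $t$; every other entry keeps the invariant trivially. Write the TD errors of the two learners on the common transition $(s,a,s')$ with base reward $r$:
\begin{align*}
\delta'_t &= r+\gamma\max_{b}Q'_t(s',b)-Q'_t(s,a),\\
\delta_t &= r+\gamma\Phi(s')-\Phi(s)+\gamma\max_b Q_t(s',b)-Q_t(s,a).
\end{align*}
Because $\Phi(s')$ does not depend on $b$, it passes through the max, so $\max_b Q'_t(s',b)=\max_b Q_t(s',b)+\Phi(s')$. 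Substituting the hypothesis then gives $\delta'_t=\delta_t$: the $\gamma\Phi(s')$ and $\Phi(s)$ terms in $\delta_t$ are exactly those produced by the shift in $Q'_t$. Since both learners update by $Q\leftarrow Q+\alpha_t\delta$, we get $Q'_{t+1}(s,a)=Q_t(s,a)+\Phi(s)+\alpha_t\delta_t=Q_{t+1}(s,a)+\Phi(s)$. The same argument works for SARSA or expected-SARSA targets, because the next-action term is shifted uniformly by $\Phi(s')$.

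\textbf{Main obstacle: identical experience.} The induction is only valid if both learners face the same experience sequence. Their behavior policies depend on the current Q-values, so we must show these policies coincide. For any rule that depends only on differences of Q-values within a state (greedy, $\epsilon$-greedy, Boltzmann), adding the state-dependent constant $\Phi(s)$ leaves the action ranking and softmax probabilities unchanged. Hence, with shared randomness, the two learners choose identical actions and see identical transitions. I would build this into the induction hypothesis as a joint statement: the Q-values agree up to $\Phi$, and the histories up to time $t$ coincide.

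\textbf{Terminal-state convention.} Finally, I would check the terminal case. At episode end the bootstrap term is dropped. The identity $\delta'_t=\delta_t$ then requires $\Phi(s_{\text{terminal}})=0$, or equivalently that the shaping term at the final transition is defined consistently. I would state this convention explicitly, since it is the one place where the algebra can fail.
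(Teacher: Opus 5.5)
Your proof is correct and follows the same route as the paper's: induction on the Q-learning updates, with the base case given by the initialization and the inductive step showing that the shaped and unshaped TD errors coincide once $Q'_k = Q_k + \Phi$ is substituted, using the fact that $\Phi(s')$ passes through the max. Two points you make explicit are left implicit in the paper's proof, and they tighten the argument: the coupling argument that both learners see identical experience (greedy, $\epsilon$-greedy and Boltzmann selection are invariant to state-dependent shifts), and the terminal-state convention $\Phi(s_{\text{terminal}})=0$.
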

The mathematical equivalence between PRM and $Q$-value initialization, as established in Theorem \ref{thm:convergence-efficiency}, provides two fundamental insights into why PRM accelerates convergence.
First, it demonstrates that PRM effectively provides prior knowledge-based initialization for value functions, directly reducing the initial approximation error $| Q_0 - Q^* |$.
Second, the proof formally justifies that any well-designed potential function $\Phi(s)$ serves as an actionable curriculum, where the shaping term $\gamma \Phi(s') - \Phi(s)$ progressively guides exploration toward high-value regions without altering optimal policies.

These theoretical results establish PRM not only as a heuristic reward design but also as a mathematically grounded approach to guide the policy with optimality and convergence.

\subsection{Reinforcement Learning Augmentation}\label{sec:rl-augmentation}
Giving an MDP  \( M = \langle \mathcal{S}, \mathcal{A}, \mathcal{T}, R, \gamma, \rho_0 \rangle \), our framework augment it through both high-level planning and low-level reward shaping.

At the high level, the policy cannot directly process the natural language form of the \textit{plan\_list}.
To address this, we employ SimCSE\cite{gao2022simcsesimplecontrastivelearning}, a sentence encoder $\mathbf{E}$, to embed the subtask prior.
Specifically, the subtask prior is expressed as:
\begin{equation*}
    \mathcal{P}: prior = \mathbf{E}(plan\_list[\Psi(s)])
\end{equation*}
where subtask prior is the embedding of current subtask corresponding to state $s$.
The state space is then augmented to $\mathcal{S}+\mathcal{P}$.
At the low level, we use the reward $R^{PRM}$ from Section \ref{sec:progress-reward} to efficiently guide the learning of policy.
Finally, the augmented MDP is expressed as:
\begin{equation*}
    \mathcal{M'} = \langle \mathcal{S}+\mathcal{P}, \mathcal{A}, \mathcal{T}, R^{PRM}, \gamma, \rho_0 \rangle
\end{equation*}
This formulation integrates both high-level planning through the subtask prior and low-level control via the PRM reward to guide policy learning.
Consequently, we can utilize existing RL optimization algorithms to optimize the policy under the augmented MDP $M'$. In this paper, we use Soft Actor-Critic (SAC)\cite{haarnoja2018softactorcriticoffpolicymaximum} and Proximal Policy Optimization (PPO)\cite{schulman2017proximal}.

\begin{figure}[t]
  \centering
  \includegraphics[width=\textwidth]{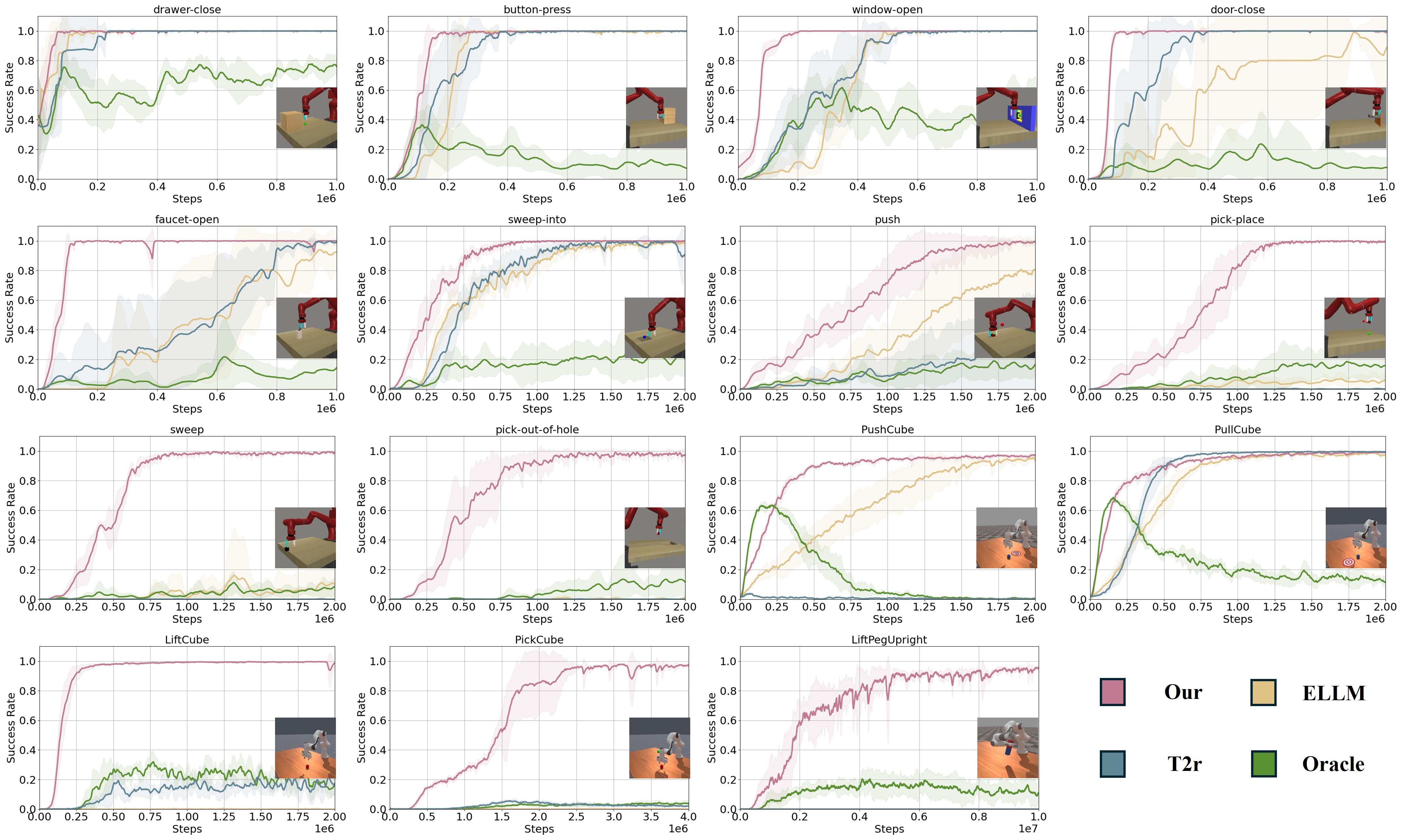}
  \caption{The learning curves of comparison methods on Metaworld and Maniskill measured by success rate. All experiments are conducted with 5 random seeds.}
  \label{fig:learning-curves}
\end{figure}

\section{Experiment}\label{sec:experiments}
In this section, we evaluate PRM4RL on a diverse suite of long-term, sparse reward RL tasks, testing its ability to provide high-level plan and guide low-level training.

\subsection{Experiment Settings}\label{sec:experimental-settings}
The framework is implemented based on the Stable-Baselines3 framework\cite{stable-baselines3}, ensuring consistency across all methods. We employ the same RL algorithm with identical training parameters for all methods.
For all LLM-related requirements, we utilize OpenAI’s GPT-4o\cite{hurst2024gpt}.
Details of the hyper-parameters are provided in Appendix \ref{app:hyperparameters}.
To ensure robustness and reliability, all experiments are conducted with five different random seeds.

\textbf{Baselines.}
ELLM(ICML23) \cite{du2023guiding} utilizes an LLM as a high-level planner, embedding subtask priors into the policy. It serves as an example of an LLM-planner approach.
T2R(ICLR24) \cite{xie2024text2reward} combines LLMs and human feedback to generate dense reward functions for policy training. We use it as an example of an LLM-rewarder approach.
Oracle is the expert-written dense reward function provided by the environment. The details of the baseline settings are provided in Appendix \ref{app:baselines}.

\textbf{Environments}
MetaWorld\cite{yu2019meta}
is an open-source simulated benchmark features a Sawyer robot interacting with a tabletop setup that includes drawer, window, ball, faucet, door and many objects.
Maniskill\cite{taomaniskill3}
is an advanced robotics simulation platform designed for high-fidelity manipulation tasks. Further details about the environment and the tasks are provided in Appendix \ref{app:tasks}.

\subsection{Evaluation Analysis}
\subsubsection{Outperform previous LLM approaches}
As shown in Figures \ref{fig:learning-curves} and \ref{fig:success-rates}, our method consistently outperforms the comparison methods across nearly all environments, demonstrating faster convergence and superior performance. We observe the following advantages:
(1) In complex long-term tasks such as MW-pick-out-of-hole and MS-LiftPegUpright, the baseline methods achieve less than 15\% success, while our method achieves nearly 100\% success, highlighting our ability to effectively solve challenging tasks.
(2) When compared to ELLM, our approach demonstrates a significantly improved convergence speed, which underscores the efficiency of the Progress Reward Model in providing dense, theoretically guaranteed rewards for the low-level control policy. This results in more stable and faster learning compared to traditional LLM-based planning approaches.
(3) In comparison with T2R, the subtask decomposition in our framework plays a critical role in simplifying the coding process. By breaking down complex tasks into more manageable subtasks, our method reduces the difficulty of understanding environment dynamics and designing useful metrics, enabling us to solve complex tasks that T2R fails to address. This advantage is particularly evident in tasks where the environment is more intricate and requires a clear separation of the planning and reward stages.

\begin{figure}[t]
  \centering
  \includegraphics[width=\textwidth]{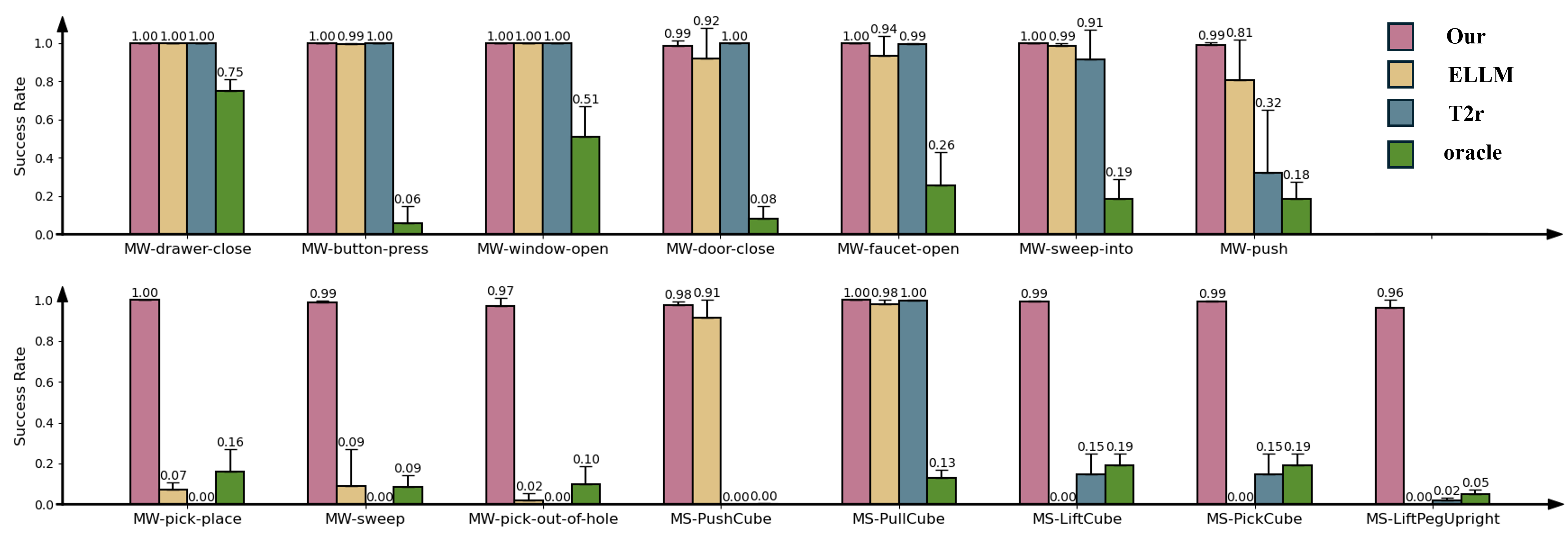}
  \caption{The evaluation results of comparison methods on Metaworld(With prefix `MW') and Maniskill(with prefix `MS') measured by success rate. All experiments are conducted with 5 random seeds.}
  \label{fig:success-rates}
\end{figure}

\begin{figure}[h]
  \centering
  \includegraphics[width=\textwidth]{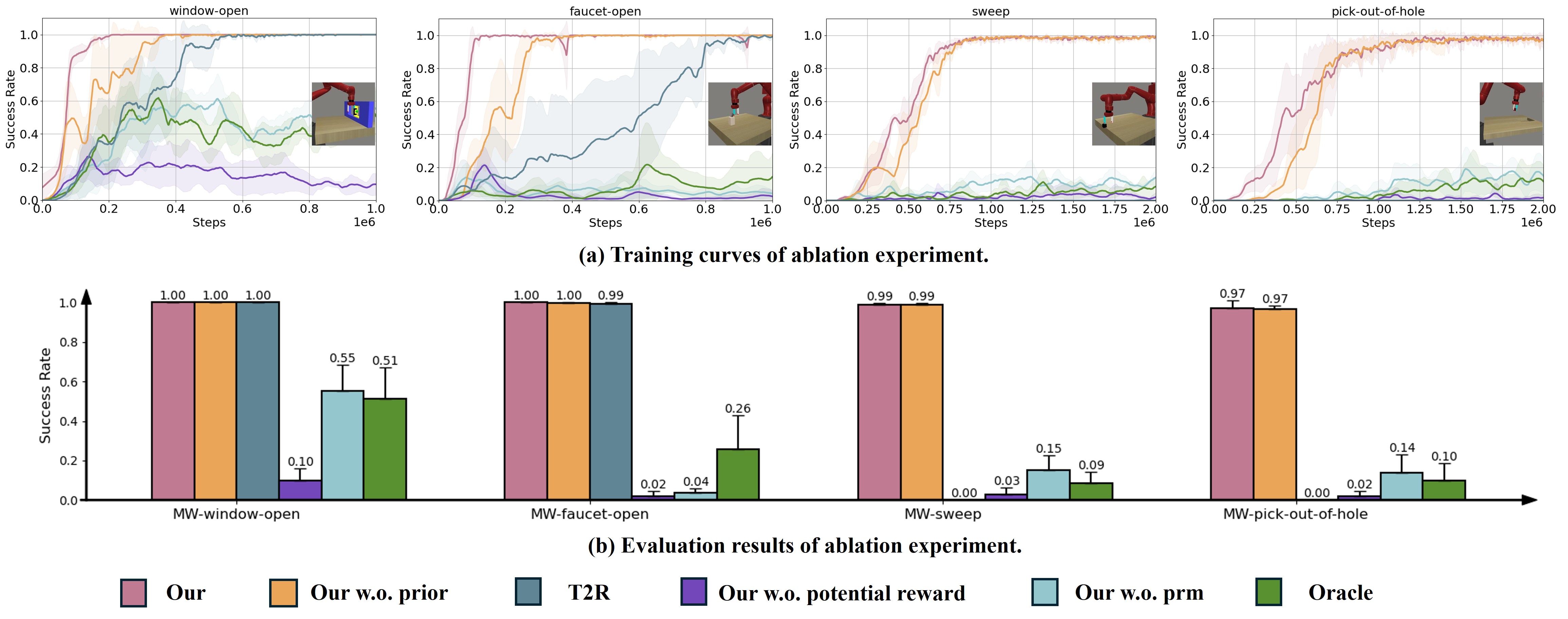}
  \caption{The evaluation results of ablation study on Metaworld measured by success rate. All experiment is conducted with 5 random seeds.}
  \label{fig:ablation}
\end{figure}

\subsubsection{Ablation Study}

To further validate the effectiveness of the two augmentations in our framework, we conduct an ablation study by constructing three additional baselines:

\begin{wrapfigure}{r}[0cm]{0pt}
\includegraphics[width=0.25\textwidth]{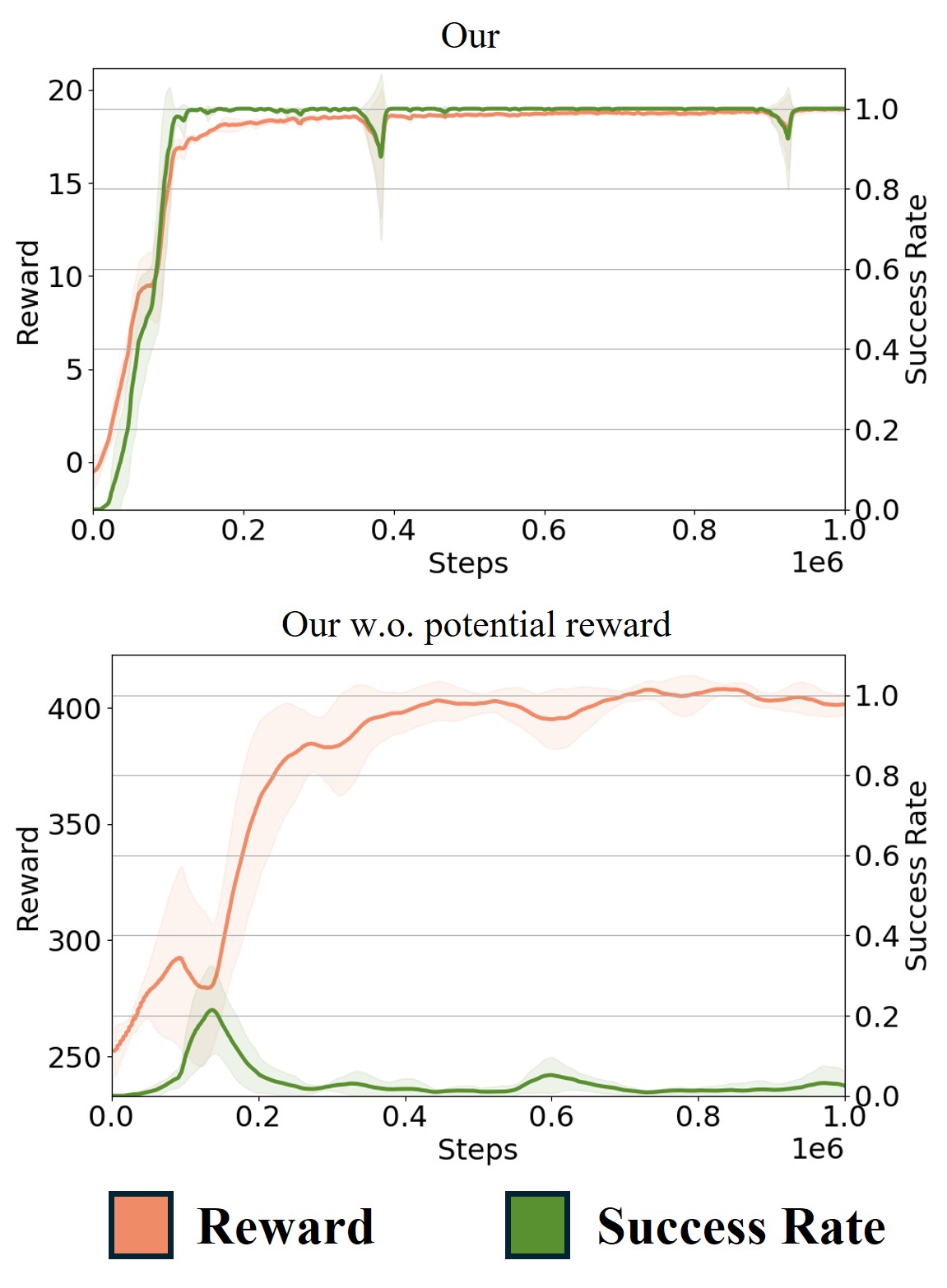}
\caption{Training details of our and our w.o. potential reward. }
\label{fig:potential-reward-ablation}
\vspace{-20pt}
\end{wrapfigure}

\textbf{w.o. PRM reward} (denoted as `our w.o. prm'): The PRM reward is replaced with the oracle reward, while retaining the subtask prior.

\textbf{w.o. subtask prior} (denoted as `our w.o. prior'): No subtask prior is provided, while retaining the PRM reward.

\textbf{w.o. potential-based reward shaping} (denoted as `our w.o. potential reward'): The progress function is used directly as the reward, while retaining the subtask prior.

The results are shown in Figure \ref{fig:ablation}. We observe the following findings:

(1) Compared with `our w.o. prm', replacing the PRM reward with the oracle reward leads to a significant drop in performance. This demonstrates that, without an efficient and effective reward signal, directly feeding the subtask prior to the policy does not improve its performance. In most cases, the training curves and evaluation results of `our w.o. prm' are similar to those of the oracle, with some instances where adding the subtask prior even causes a slight performance decrease.

(2) The performance of `our w.o. prior' shows that abandoning the subtask prior results in slower convergence in the training curves. This highlights the role of the subtask prior in enabling efficient convergence for complex, long-term tasks. Moreover, `our w.o. prior' still outperforms T2R, further demonstrating the effectiveness of the PRM reward in guiding the policy.

(3)The performance of `our w.o. potential reward' further demonstrate the effectiveness of our potential-based reward model, where directly using progress as reward signal lead to a significant drop in performance.
We also analyzed the anomalous behavior of `our w.o. potential reward', where the success rate initially rises but then declines as training progresses.
As shown in Figure \ref{fig:potential-reward-ablation}, the episode reward increases, while the success rate paradoxically decreases.
This suggests that the using progress function directly as reward function suffers from inefficiencies, as there are instances where the task fails despite receiving a high reward, leading the policy to learn in an undesirable direction.
In contrast, our method ensures stable convergence, with the PRM providing well-structured and task-relevant rewards with optimality and convergence guarantees.

In summary, the two enhancements in our framework—the subtask prior and the PRM reward—work synergistically to produce a "1+1 > 2" effect. Without the subtask prior, convergence speed is hindered, and without the reward signal, providing effective and efficient guidance becomes challenging.

\begin{figure}[h]
  \centering
  \includegraphics[width=\textwidth]{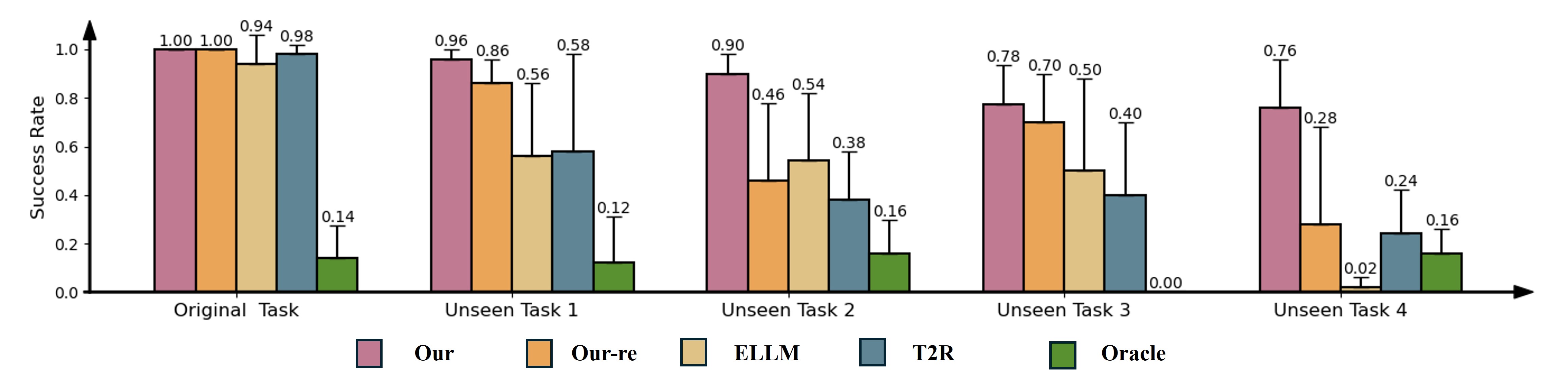}
  \caption{The evaluation results of generalization experiment. The results are averaged on 5 random seeds and 100 evaluations per seed.}
  \label{fig:generalization}
\end{figure}

\subsubsection{Generalization Results}\label{sec:generalization}
To evaluate the generalization capabilities of the trained policy, we test the trained policy's performance on unseen tasks in MetaWorld. Additional results and detailed information regarding the tasks are provided in Appendix \ref{app:generalization}.
As shown in Figure \ref{fig:generalization}, while most methods perform well on the original task, we observe a performance decline on the unseen tasks.
In comparison to other methods, our framework experiences the least performance loss, demonstrating relatively strong generalization abilities.
Notably, when comparing to our `w.o. prior' (our framework without the subtask prior), we observe a significant drop in performance on unseen tasks.
This highlights the critical role of the subtask prior in enhancing generalization.
The "verb + noun" pattern we designed effectively captures commonalities across different tasks, acting as a powerful prior that guides the policy to perform well on unseen tasks.
Thus, the subtask prior serves as a robust tool for improving the generalization of reinforcement learning policies across a wide range of tasks.

\section{Conclusion}\label{sec:conclusion}
In this paper, we introduced the Progress Reward Model for Reinforcement Learning (PRM4RL), a novel framework that integrates high-level planning and low-level reward shaping to address the challenges of long-term planning and sparse rewards in reinforcement learning.
At the high level, we decompose a complex task into manageable subtasks with subtask prior.
At the low level, we design a Progress Reward Model with optimality and convergence guarantee.
The experimental results demonstrate the effectiveness and efficiency of our method, significantly improving both convergence speed and the ability to solve complex, multi-stage tasks.

\textbf{Limitations.}\label{sec:limitations}
Similar to T2R\cite{xie2024text2reward} and other methods\cite{du2023guiding, brohan2023can, dalal2024plan} that depend on LLMs for planning or coding, our approach could be affected by LLM hallucinations during reasoning\cite{liu2024exploring}.
Besides, while our method demonstrates strong performance across various tasks, its applicability to domains outside of robotics remains unexplored.
We anticipate that further optimization and adaptation of the framework will be necessary to address diverse environments and tasks.

\bibliographystyle{unsrt}
\bibliography{references}

\clearpage
\appendix
\section{Theoretical Proof}\label{app:proofs}
\begin{theorem}[Policy Invariance]\label{thm:policy-invariance-proof}
Define an MDP \( M = \langle \mathcal{S}, \mathcal{A}, \mathcal{T}, R, \gamma, \rho_0 \rangle \), where $R_t = \mathbb{I}(s_{t+1})\cdot r_{bonus}$ represents the sparse reward when the task is considered successful.
Giving a potential-based reward shaping function as follows:
\begin{equation*}
    F =  \gamma \cdot \Phi(s_{t+1}) - \Phi(s_{t})
\end{equation*}
Let another MDP \( M' = \langle \mathcal{S}, \mathcal{A}, \mathcal{T}, R+F, \gamma, \rho_0 \rangle \), then every optimal policy in $M'$ will also be an optimal policy in $M$ and vice versa.
\end{theorem}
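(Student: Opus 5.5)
The plan is to follow the classical argument of \cite{ng1999policy} and show that the optimal action-value functions of $M$ and $M'$ differ only by a state-dependent offset $\Phi(s)$. Such an offset cannot change the argmax over actions, so the two MDPs have the same optimal policies.

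We first fix conventions: $\Phi:\mathcal{S}\to\mathbb{R}$ is bounded, and either $\gamma<1$ or, in the episodic finite-horizon case, $\Phi$ vanishes at terminal/absorbing states. The same bookkeeping then applies to both settings. We write the reward in its transition form $R(s,a,s')=\mathbb{I}(s')\,r_{bonus}$ and $F(s,a,s')=\gamma\Phi(s')-\Phi(s)$.

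The key step is to compare the Bellman optimality equations. The optimal $Q$-function of $M$ satisfies
\begin{equation*}
Q^*_M(s,a)=\mathbb{E}_{s'\sim\mathcal{T}(\cdot|s,a)}\Big[R(s,a,s')+\gamma\max_{a'}Q^*_M(s',a')\Big].
\end{equation*}
Subtracting $\Phi(s)$ from both sides and adding and subtracting $\gamma\Phi(s')$ inside the expectation gives
\begin{equation*}
\hat Q(s,a):=Q^*_M(s,a)-\Phi(s)=\mathbb{E}_{s'}\Big[R(s,a,s')+F(s,a,s')+\gamma\max_{a'}\hat Q(s',a')\Big].
\end{equation*}
This is exactly the Bellman optimality equation of $M'$. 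The Bellman optimality operator is a $\gamma$-contraction in sup-norm, and $\Phi$ bounded keeps everything in the right function space, so the fixed point is unique and $Q^*_{M'}=Q^*_M-\Phi$.

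Equivalently, one can argue along trajectories. The shaped discounted return telescopes to $\sum_t\gamma^t R_t+\gamma^{T+1}\Phi(s_{T+1})-\Phi(s_0)$, and the boundary term vanishes, either as $T\to\infty$ when $\gamma<1$ and $\Phi$ is bounded, or by the terminal convention. Hence $Q^{\pi}_{M'}(s,a)=Q^{\pi}_M(s,a)-\Phi(s)$ for \emph{every} policy $\pi$, not only optimal ones. I would present this trajectory version as well, since it matches the finite-$T$ return used in the Preliminary section.

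The conclusion follows quickly. Since $\arg\max_a Q^*_{M'}(s,a)=\arg\max_a\big(Q^*_M(s,a)-\Phi(s)\big)=\arg\max_a Q^*_M(s,a)$ for every $s$, any policy greedy with respect to $Q^*_{M'}$ is greedy with respect to $Q^*_M$, and the converse holds by symmetry (reshape $M'$ with $-\Phi$). Optimal policies are exactly those supported on these argmax sets, so the optimal policies of the two MDPs coincide. For the objective $J$ with $s_0\sim\rho_0$, the per-policy identity gives $J_{M'}(\pi)=J_M(\pi)-\mathbb{E}_{\rho_0}[\Phi(s_0)]$, a $\pi$-independent constant. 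This yields the same conclusion directly.

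The main obstacle is the boundary term in the finite-horizon, time-limited setting the paper actually uses. If episodes are truncated at a fixed $T$ with $\Phi(s_{T+1})\neq 0$, the residual $\gamma^{T+1}\Phi(s_{T+1})$ depends on the policy and invariance can fail. I would handle this by stating the assumption explicitly: either $\gamma<1$ with an infinite horizon, or $\Phi$ set to $0$ at terminal states. I would also note that, for success-terminated episodes, one can alternatively treat the terminal state as absorbing with zero potential.
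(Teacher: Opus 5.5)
Your proposal is correct and is essentially the paper's proof. Both follow Ng et al.: subtract $\Phi(s)$ from the Bellman optimality equation of $M$, recognize the Bellman equation of $M'$ so that $Q^*_{M'}=Q^*_M-\Phi$ by uniqueness, conclude by argmax invariance, and get the converse by reshaping with $-\Phi$. Your per-policy telescoping identity $Q^\pi_{M'}=Q^\pi_M-\Phi$ and your explicit boundary conditions sharpen the paper's argument rather than change its route. Those conditions are bounded $\Phi$ with $\gamma<1$, or $\Phi=0$ at absorbing states; the latter is what the paper's undiscounted remark $\hat Q_{M'}(s_0,a)=0$ implicitly assumes, with $s_0$ there meaning the absorbing state. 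One caveat: the contraction argument does not cover $\gamma=1$, so in the episodic undiscounted case it is your telescoping version that carries the proof.
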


\begin{proof}
For the original MDP \( M \), we know that its optimal \( Q \)-function \( Q^*_M \) satisfies the Bellman Equations

\[
Q^*_M(s,a) = \mathbb{E}_{s' \sim P_{s,a}(\cdot)} \left[ R(s,a,s') + \gamma \max_{a' \in A} Q^*_M(s',a') \right]
\]

Some simple algebraic manipulation then gives us

\[
Q^*_M(s,a) - \Phi(s) = \mathbb{E}_{s'} \left[ R(s,a,s') + \gamma \Phi(s') - \Phi(s) + \gamma \max_{a' \in A} \left( Q^*_M(s',a') - \Phi(s') \right) \right]
\]

If we now define \( \hat{Q}_{M'}(s,a) \triangleq Q^*_M(s,a) - \Phi(s) \) and substitute that and \( F(s,a,s') = \gamma \Phi(s') - \Phi(s) \) back into the previous equation, we get

\begin{align*}
\hat{Q}_{M'}(s,a)
&= \mathbb{E}_{s'} \left[ R(s,a,s') + F(s,a,s') + \gamma \max_{a' \in A} \hat{Q}_{M'}(s',a') \right] \\
&= \mathbb{E}_{s'} \left[ R'(s,a,s') + \gamma \max_{a' \in A} \hat{Q}_{M'}(s',a') \right]
\end{align*}

But this is exactly the Bellman equation for \( M' \). For the undiscounted case, we moreover have \( \hat{Q}_{M'}(s_0, a) = Q^*_M(s_0, a)- \Phi(s_0) = 0 - 0 = 0 \). So, \( \hat{Q}_{M'}(s,a) \) satisfies the Bellman equations for \( M' \), and must in fact be the unique optimal \( Q \)-function. Thus, \( Q^*_{M'}(s,a) = \hat{Q}_{M'}(s,a) = Q^*_M(s,a) - \Phi(s)\) and the optimal policy for \(M'\) therefore satisfies.
\begin{align*}
\pi^*_{M'}(s)
& \in \arg \max_{a \in A} Q^*_{M'}(s,a) \\
& = \arg \max_{a \in A} Q^*_{M}(s,a) - \Phi(s)\\
& = \arg \max_{a \in A} Q^*_M(s,a)
\end{align*}
and is therefore also optimal in \( M \). To show every optimal policy in \( M \) is also optimal in \( M' \), simply apply the same proof with the roles of \( M \) and \( M' \) interchanged (and using the shaping function \( -F \)). This completes the proof.
\end{proof}

\begin{theorem}[Convergence Efficiency]\label{thm:convergence-efficiency-proof}
Consider two MDPs: \( M = \langle \mathcal{S}, \mathcal{A}, \mathcal{T}, R+F, \gamma, \rho_0 \rangle \) with $Q$-function $Q$ and \( M' = \langle \mathcal{S}, \mathcal{A}, \mathcal{T}, R, \gamma, \rho_0 \rangle \). If we initialize the $Q$-function in $M'$ as follows:
\begin{equation*}
    Q'_0(s, a) = Q_0(s, a) + \Phi(s)
\end{equation*}
Then for all timesteps $t \geq 0$ and $(s,a) \in \mathcal{S} \times \mathcal{A}$:
\begin{equation}
    Q'_t(s,a) = Q_t(s,a) + \Phi(s)
\end{equation}
where $Q_t$ and $Q'_t$ are the Q-functions of MDP $M$ and $M'$ respectively.
\end{theorem}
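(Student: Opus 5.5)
The argument follows Wiewiora (2003) and runs by induction on the number of learning updates. The statement leaves the learning procedure implicit, so we first make it explicit. Both agents run the same tabular Q-learning rule with the same step sizes $\alpha_t$, and both see the same sequence of experience tuples $(s_t,a_t,s_{t+1})$. At step $t$ only the entry $(s_t,a_t)$ changes:
\begin{align*}
Q_{t+1}(s_t,a_t) &= Q_t(s_t,a_t) + \alpha_t\bigl[r_t + F_t + \gamma \max_{a'} Q_t(s_{t+1},a') - Q_t(s_t,a_t)\bigr],\\
Q'_{t+1}(s_t,a_t) &= Q'_t(s_t,a_t) + \alpha_t\bigl[r_t + \gamma \max_{a'} Q'_t(s_{t+1},a') - Q'_t(s_t,a_t)\bigr].
\end{align*}
Here $F_t=\gamma\Phi(s_{t+1})-\Phi(s_t)$. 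The shaped agent $M$ receives $R+F$, and the unshaped agent $M'$ receives $R$ but is initialized with $\Phi$.

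The base case $t=0$ holds by the assumed initialization $Q'_0=Q_0+\Phi$. For the inductive step, assume $Q'_t(s,a)=Q_t(s,a)+\Phi(s)$ for all $(s,a)$, and compare the two TD errors.

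Since $\Phi(s_{t+1})$ does not depend on $a'$, the inductive hypothesis gives
\[
\max_{a'}Q'_t(s_{t+1},a')=\max_{a'}Q_t(s_{t+1},a')+\Phi(s_{t+1}).
\]
Substituting this and $Q'_t(s_t,a_t)=Q_t(s_t,a_t)+\Phi(s_t)$ into the bracket for $M'$ gives
\[
\delta'_t = r_t + \gamma\max_{a'}Q_t(s_{t+1},a') + \gamma\Phi(s_{t+1}) - Q_t(s_t,a_t) - \Phi(s_t).
\]
This is exactly $\delta_t$, the TD error of the shaped agent. The terms $\gamma\Phi(s_{t+1})-\Phi(s_t)$ that $M$ receives explicitly as $F_t$ appear in $M'$ through its initialization offset.

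Both agents therefore add the same increment $\alpha_t\delta_t$ to the same entry $(s_t,a_t)$. That entry keeps its offset $\Phi(s_t)$, and every other entry is untouched, so it keeps its offset $\Phi(s)$. This closes the induction.

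The algebra itself is routine. The main point of care is making precise what "timestep $t$" means. The claim only holds if both learners follow identical update schedules on identical experience. In particular, the actions chosen must coincide, which needs an exploration policy that depends only on differences of Q-values within a state, such as greedy, $\epsilon$-greedy, or Boltzmann. Since $Q'_t(s,\cdot)$ and $Q_t(s,\cdot)$ differ by the state-dependent constant $\Phi(s)$, such policies pick the same action distribution. We would state this assumption explicitly and then argue by a coupling: under a common random seed the two trajectories coincide, which keeps the induction well-defined.

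A terminal-state convention is also needed. If the max term is zero at terminal states, the identity requires $\Phi=0$ on terminal states. Otherwise we treat terminal states as absorbing, with their own Q-entries carrying the same offset. We would adopt the latter convention, or assume $\Phi(\text{terminal})=0$, consistent with the normalization used in the proof of Theorem~\ref{thm:policy-invariance-proof}.
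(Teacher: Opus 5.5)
Your proposal is correct and matches the paper's proof: induction on the Q-learning updates, with the initialization offset $\Phi(s)$ reproducing exactly the shaping term $\gamma\Phi(s')-\Phi(s)$ inside the TD error, so both learners apply the same increment. Your extra remarks make explicit assumptions the paper's proof leaves implicit. These are that both learners must see identical experience (which holds under exploration policies invariant to state-dependent shifts, such as greedy, $\epsilon$-greedy or Boltzmann), that untouched entries keep their offset, and that $\Phi$ must vanish at terminal states (or terminal states must be treated as absorbing).
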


\begin{proof}
We prove the equivalence by induction on the update steps.

\textbf{Base case ($t=0$):}
By initialization definition:
\begin{equation}
    Q'_0(s,a) = Q_0(s,a) + \Phi(s) \quad \forall (s,a)
\end{equation}

\textbf{Inductive step:} Assume $Q'_k(s,a) = Q_k(s,a) + \Phi(s)$ holds for all $(s,a)$ at step $k$. Consider the Q-learning update for both agents when transitioning from $(s,a)$ to $(s',a')$ with reward $r$:

In MDP $M$:
\begin{align*}
    Q_{k+1}(s,a) &= Q_k(s,a) + \alpha_k \Big[ r + F(s,s') + \gamma \max_{a'} Q_k(s',a') - Q_k(s,a) \Big] \\
    &= Q_k(s,a) + \alpha_k \Big[ r + (\gamma\Phi(s') - \Phi(s)) + \gamma \max_{a'} Q_k(s',a') - Q_k(s,a) \Big]
\end{align*}

In MDP $M'$:
\begin{align*}
    Q'_{k+1}(s,a) &= Q'_k(s,a) + \alpha_k \Big[ r + \gamma \max_{a'} Q'_k(s',a') - Q'_k(s,a) \Big] \\
    &= [Q_k(s,a) + \Phi(s)] + \alpha_k \Big[ r + \gamma \max_{a'} [Q_k(s',a') + \Phi(s')] - [Q_k(s,a) + \Phi(s)] \Big] \\
    &= Q_k(s,a) + \Phi(s) + \alpha_k \Big[ r + \gamma \max_{a'} Q_k(s',a') + \gamma\Phi(s') - Q_k(s,a) - \Phi(s) \Big]
\end{align*}

Simplify the difference:
\begin{align*}
    Q'_{k+1}(s,a) - Q_{k+1}(s,a) &= \Phi(s) + \alpha_k \Big[ \gamma\Phi(s') - \Phi(s) \Big] - \alpha_k \Big[ \gamma\Phi(s') - \Phi(s) \Big] \\
    &= \Phi(s)
\end{align*}

Thus $Q'_{k+1}(s,a) = Q_{k+1}(s,a) + \Phi(s)$ maintains the invariant. By induction, the equivalence holds for all $t \geq 0$.
\end{proof}

\section{Experiment Details}

\subsection{Additional Generalization Results}\label{app:generalization}
\begin{figure}[h]
  \centering
  \includegraphics[width=\textwidth]{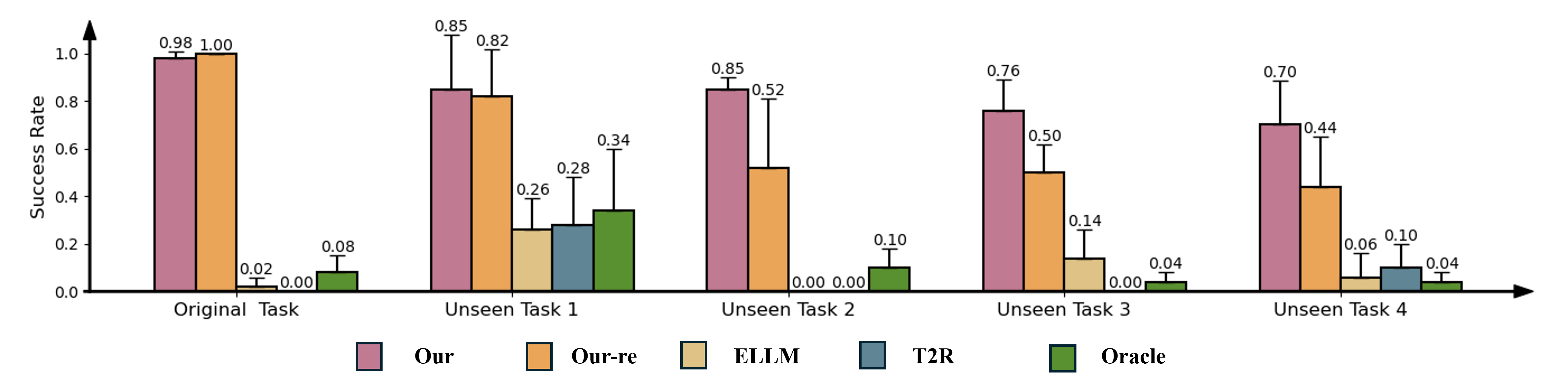}
  \caption{The evaluation results of generalization experiment.}
\label{fig:additional-generalization}
\end{figure}
We conduct additional generalization evaluations. Specifically, for the evaluation results shown in Section \ref{sec:generalization}, the original task is `Metaworld-faucet-open', the unseen tasks are `Metaworld-drawer-close', `Metaworld-button-press-wall', `Metaworld-button-press', `Metaworld-coffee-button', respectively.
For the results shown in Figure \ref{fig:additional-generalization}, the original task is `Metaworld-pick-place', the unseen tasks are `Metaworld-drawer-close', `Metaworld-pick-place-wall', `Metaworld-sweep-into', `Metaworld-handle-press', respectively.
We test the trained policy in evaluation tasks, and the results are averaged over 5 seeds and 100 test per seed.

\subsection{Baseline Settings}\label{app:baselines}
\textbf{ELLM}(ICML23)\cite{du2023guiding} utilizes an LLM as a high-level planner, embedding subtask priors into the policy. Since ELLM has not been previously tested on Metaworld and Maniskill, we re-implement it the experiments. For goal embedding, we gave the same subtask prior as in our framework to the policy; For reward, we then gave it the subtask success reward, which is also goal-related as in its original application. We also use our $\Psi$ for determining current subtask to avoid enormous LLM calls.

\textbf{Text2Reward}(ICLR24)\cite{xie2024text2reward} utilizes an LLM to write dense reward function. As T2R has evaluate its performance on Metaworld and Maniskill, we directly reuse its generated reward functions. If there are few-shot and zero-shot versions of reward functions, we always choose the few-shot version for better performance. Specifically, for `Metaworld-drawer-close', `Metaworld-window-open', `Metaworld-button-press', `Metaworld-sweep-into', `Metaworld-door-close', `Maniskill-LiftCube', `Maniskill-PickCube', we reuse its generated reward functions. For other tasks that are not orginally evaluated in T2R, we generate a reward function following its code.

\section{Hyper-Parameter Details}\label{app:hyperparameters}
In this section, we provide the hyper-parameter details used for our reward function generation and reinforcement learning backbones. For progress function generation, we base on GPT-4o. In the experiments of the main body, the temperature of sampling is set to 0.7 for each experiment.

For reinforcement learning training, we employed the open-source PPO\cite{schulman2017proximal} and SAC implementation\cite{haarnoja2018softactorcriticoffpolicymaximum} from Stable-Baselines3\cite{stable-baselines3}
\footnote{stable-baselines3 v2.6.0 (MIT License), code available at \url{https://github.com/DLR-RM/stable-baselines3}}
, and list the hyper-parameters in Table \ref{tab:sac-hyperparameters} and \ref{tab:ppo-hyperparameters}.

\begin{table}[h]
\caption{Hyper-parameter of SAC algorithm applied to each task.}
\label{tab:sac-hyperparameters}
\centering
\begin{tabular}{ll}
\toprule
\textbf{Hyper-parameter}         & \textbf{Value}                                              \\
\midrule
Discount factor \( \gamma \)     & 0.99                       \\
Target update frequency          & 2                            \\
Learning rate                    & \( 3 \times 10^{-4} \)                     \\
Train frequency                  & 1                              \\
Soft update \( \tau \)           & \( 5 \times 10^{-3} \)                                      \\
Gradient steps                   & 1                           \\
Learning starts                  & 4000                                                       \\
Batch Size                       & 512                        \\
\# of layers                      & 3                             \\
Hidden units per layer           & 256                                                        \\
Initial temperature              & 0.1                       \\
Rollout steps per episode        & 500                         \\
\bottomrule
\end{tabular}
\end{table}

\begin{table}[h]
\centering
\caption{Hyper-parameter of PPO algorithm applied to each task.}
\label{tab:ppo-hyperparameters}
\begin{tabular}{ll}
\toprule
\textbf{Hyper-parameter}         & \textbf{Value}                                              \\
\midrule
Discount factor \( \gamma \)     & 0.8                        \\
\# of epochs per update          & 15                           \\
Learning rate                    & \( 1 \times 10^{-4} \)                                      \\
\# of environments               & 8                                                          \\
Batch size                       & 400                              \\
Target KL divergence             & 0.1                           \\
\# of layers                      & 3                                                         \\
Hidden units per layer           & 256                                                        \\
\# of steps per update           & 3200                        \\
\bottomrule
\end{tabular}
\end{table}

We utilize a 4 NVidia Geforce RTX-3090 graphic cards, 128 core CPUs, and 256 GiB memory server for RL training. The time required for training a policy is approximately 1 hours per task (5 seeds running simultaneouly) for MetaWorld, and between 1 to 10 hours for ManiSkill, varying with the task’s difficulty.

\section{Tasks}\label{app:tasks}
In this section, we provide a full list of tasks within each simulation environment, accompanied by their corresponding language instructions.
\subsection{Metaworld}
In METAWORLD\cite{yu2019meta}\footnote{Metaworld v2.0.0 (MIT License), code available at \url{https://github.com/Farama-Foundation/Metaworld}} environment, we use a 7 DoF Sawyer robot arm with a fixed base to complete tabletop tasks. For all tasks, the observation space is a combination of the 3D position of the robot end-effector, a normalized measurement of gripper openness, the 3D position of the manipulated object, the quaternion of the object, all of the previous measurements, and the goal position. The environment adopts end-effector delta position control, which means the action space consists of the change of the end effector’s 3D position, as well as the normalized torque the gripper should apply. For all tasks, the initial and target positions of the manipulated object and the initial joint positions of the robot arm are variable.

\begin{figure}[h]
  \centering
  \includegraphics[width=\textwidth]{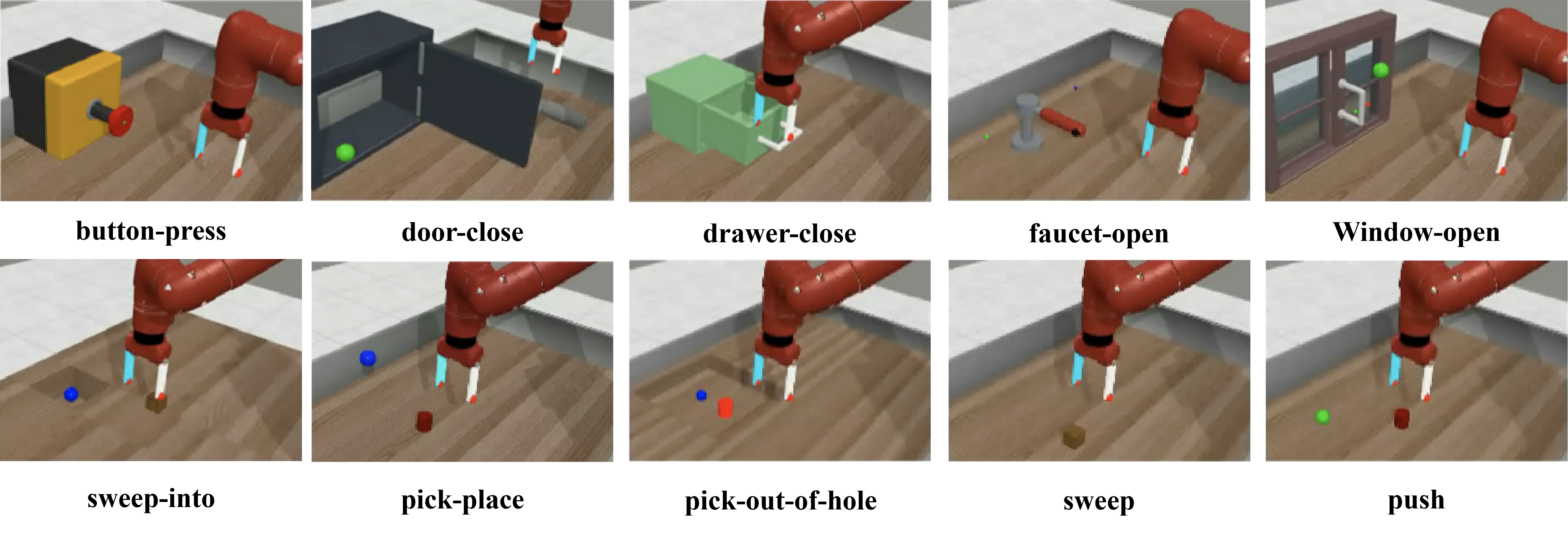}
  \caption{Tasks in Metaworld.}
  \label{fig:metaworld-tasks}
\end{figure}

The task descriptions are provided in the following Table \ref{tab:metaworld-tasks}.
\begin{table}[h]
\centering
\caption{Task list of Metaworld}
\label{tab:metaworld-tasks}
\begin{tabular}{ll}
\toprule
\textbf{Task}         & \textbf{Task Description}   \\
\midrule
button-press & Press a button in y coordination. \\
door-close & Close a door with a revolving joint by pushing the door’s handle. \\
drawer-close & Close a drawer by its handle. \\
faucet-open & Rotate the handle counter-clockwise. \\
window-open & Push and open a sliding window by its handle. \\
sweep-into & Sweep a puck from the initial position into a hole. \\
pick-place & Pick up an object and move it to the goal location. \\
pick-out-of-hole & Pick an object out of a hole and move it to the goal location. \\
sweep &  Sweep a puck off the table.\\
push &  Push an object to the goal location.\\
\bottomrule
\end{tabular}
\end{table}

\subsection{Maniskill}
MANISKILL\cite{taomaniskill3}\footnote{Maniskill3 v3.0.0 (Apache-2.0 License), code available at \url{https://github.com/haosulab/ManiSkill}} environment uses a 7 DoF Franka Panda as the default robot arm. For all tasks, the observation space consists of robot proprioception information (e.g. current joint positions, current joint velocities, robot base position and quaternion in the world frame) and task-specific information (e.g. goal position, endeffector position). We use end-effector delta pose control mode for this environment, which controls the change of 3D position and rotation (represented as an axis-angle in the end-effector frame). For all tasks, the initial and target positions of the manipulated object, the initial joint positions of the robot arm and physical parameters (e.g. friction and damping coefficient) are variable.

\begin{figure}[h]
  \centering
  \includegraphics[width=\textwidth]{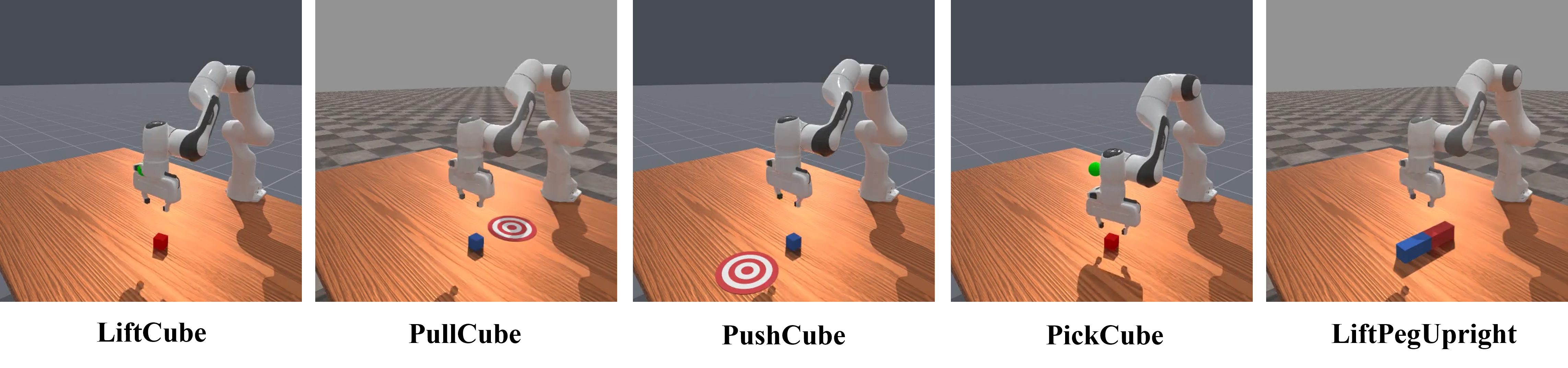}
  \caption{Tasks in Maniskill.}
  \label{fig:maniskill-tasks}
\end{figure}

The task descriptions are provided in the following Table \ref{tab:maniskill-tasks}.
\begin{table}[h]
\centering
\caption{Task list of Maniskill}
\label{tab:maniskill-tasks}
\begin{tabular}{ll}
\toprule
\textbf{Task}         & \textbf{Task Description}   \\
\midrule
LiftCube & Pick up cube A and lift it up by 0.2 meters. \\
PullCube & Pull a cube to the goal position. \\
PushCube & Push a cube to the goal position. \\
PickCube & Pick up cube A and move it to the 3D goal position. \\
LiftPegUpright & Move a peg laying on the table to any upright position on the table. \\
\bottomrule
\end{tabular}
\end{table}

\section{Prompt Details}\label{app:prompts}

To ground LLMs into robotics simulation environments, following Text2Reward\cite{xie2024text2reward} we propose a Pythonic prompt, which can be abstracted by the experts who developed the environment. This class-like prompt is a more compact representation than simply listing all environment attributes linearly, which can delete redundant information and save more tokens, and this Pythonic prompt can also better bootstrap Python code generation. More specifically, our prompt leverages python class, class attribute, python typing and comments to recursively define the environment. For simplicity, here we prompt the progress function to return the current subtask index together with the main progress.

Here we provide an example of the zero-shot prompt for Metaworld manipulation tasks:

\definecolor{backgroundyellow}{rgb}{0.99215686, 0.9882353, 0.97647}
\definecolor{kuangyellow}{rgb}{0.90980, 0.882353, 0.72941}
\definecolor{commentgreen}{rgb}{0.415686, 0.670588, 0.45098}
\definecolor{keywordblue}{rgb}{0.32549, 0.5372549, 0.81961}
\definecolor{stringorange}{rgb}{0.8745098, 0.47843, 0.368627}

\lstset{
    basicstyle=\ttfamily\small,
    backgroundcolor=\color{backgroundyellow},
    frame=none,
    breaklines=true,
    language=Python,
    keywordstyle=\color{keywordblue},
    commentstyle=\color{commentgreen},
    stringstyle=\color{stringorange},
}

\textbf{Basic Prompt}
provides the foundational context for the task. The prompt establishes the user's role as an expert in robotics, reinforcement learning, and code generation, focusing on the development of a Python function called `progress function' for RL. The objective is to help generate code that tracks progress during task execution, guiding the learning of a low-level policy.
\begin{tcolorbox}[colframe=kuangyellow, colback=backgroundyellow, coltitle=black, title={Basic Propmt}, breakable]
You are an expert in robotics, reinforcement learning and code generation.We are going to use a robot arm to complete given tasks. Now I want you to help me write a python function named `progress function' of reinforcement learning.
\end{tcolorbox}

\textbf{Environment Information}
defines the environment in which the robot operates. The BaseEnv class describes the robot and its interaction with objects in the environment, including their positions and states. The Robot and RigidObject classes further define the robot's end effector, object positions, and other relevant parameters. This context provides essential details for the task, enabling the LLM to make informed decisions about task progress and control.
\begin{tcolorbox}[colframe=kuangyellow, colback=backgroundyellow, coltitle=black, title={Environment Information}, breakable]
\begin{lstlisting}
class BaseEnv(gym.env):
    self.robot : Robot # the robot in the environment
    self.obj : RigidObject # the first object in the environment
    self.goal_position : np.ndarray[(3,)] # indicate the 3D position of the goal
    near_object = float(tcp_to_obj <= 0.3)
    success = float(obj_to_target <= 0.07)

class Robot:
    self.ee_position : np.ndarray[(3,)] # indicate the 3D position of effector
    self.hand_init_pos: np.ndarray[(3,)] # indicate the initial 3D position of effector
    self.tcp = self.ee_position - [0, 0, 0.045]  # indicate the 3D position of effector's tool center point

class RigidObject:
    self.position : np.ndarray[(3,)] # indicate the 3D position of the rigid object
    self.quaternion : np.ndarray[(4,)] # indicate the quaternion of the rigid object
    self.obj_init_pos : np.ndarray[(3,)] # indicate the initial 3D position of the rigid object
\end{lstlisting}
\end{tcolorbox}

\textbf{Task Information}
provides a description of the task to be completed by the robot. The task description, which needs to be filled in by the user, is crucial for decomposing the task into manageable subtasks.
\begin{tcolorbox}[colframe=kuangyellow, colback=backgroundyellow, coltitle=black, title={Task Information}, breakable]
\{ Fill in the task description here. \} (e.g. In pick-place, the robot need to pick up the object and move it to the goal position.)
\end{tcolorbox}

\textbf{Basic Instructions}
provide guidelines for writing the code necessary for the task. The instructions emphasize the importance of using existing Python packages only when necessary, avoiding the invention of new variables or attributes, and incorporating step-by-step reasoning with clear comments. These instructions ensure that the code remains efficient, understandable, and aligned with the task's requirements.
\begin{tcolorbox}[colframe=kuangyellow, colback=backgroundyellow, coltitle=black, title={Basic Instruction}, breakable]
1. You are allowed to use any existing python package if applicable. But only use these packages when it's really necessary.

\vspace{2pt}

2. Do not invent any variable or attribute that is not given.

\vspace{2pt}

3. Think step by step, add comment for reasoning and thought when you write code.
\end{tcolorbox}

\textbf{Reasoning Instructions}
outline a step-by-step procedure for task decomposition and the generation of key functions.
\begin{tcolorbox}[colframe=kuangyellow, colback=backgroundyellow, coltitle=black, title={Reasoning Instructions}, breakable]
Follow the steps below:

\vspace{2pt}

a. Decompose the task and generate plan\_list\\
a.1 Decompose the given task into a series of subtasks. Each subtask should be described in a `verb + noun' form.
tips: `move to object', `grasp object', `move to goal'...\\
a.2 Generate a python list named plan\_list where each subtask is an element of the plan\_list.

\vspace{5pt}

b. Write a `determination\_function' for determine which subtask we are in based on Environment Information above.\\
b.1 For each subtask in the plan\_list, consider when each subtask is considered completed, express the condition in python code form.
Typical completion conditions:
 -`Move' completion: distance under threshold
 -`Grasp' completion: object's z-coordinate higher than its initialize z-coordinate.\\
b.2 Write a `determination\_function' using `if' and `elif' to categorize the current subtask based on Environment Information
 - consider the logic of determining subtasks completion, using `if not' `elif not' if needed.\\
b.3 The `determination\_function' return the current subtask's index in plan\_list.

\vspace{5pt}

c. Write a `progress\_function'\\
c.1 For each subtask, construct a progress metric named `subprogress' indicating the progress of the corresponding subtask. The `subprogress' should be range from 0 (no progress) to 1 (completed)
Typical subprogress metric: the distance between the robot's gripper and target object normalized by its initial distance; \\
c.2 Based on the `determination\_function', add the `subprogress' computation codes in the corresponding `if' or `elif' block.\\
c.3 Add the `main\_progress' computation codes in each subtask's `if' or `elif' block.
- `main\_progress' is the progress of the entire task.
- typically, main\_progress can be calculated by adding the current subtask idx and current subprogress, regarding all completed subprogress as `1'.
The completed progress can be set to `1' without computing again.\\
c.4 `progress function' return the main\_progress and the current subtask's index in plan\_list.

\end{tcolorbox}

\section{Example of Generated Progress Function}
In this section, to help better understand the generation ability and results, we provide zero-shot generated reward function on Metaworld. The response of the prompt defined in Appendix \ref{app:prompts} is as follows:

\begin{tcolorbox}[colframe=kuangyellow, colback=backgroundyellow, coltitle=black, title={LLM response in Metaworld-pick-place}, breakable]

\begin{lstlisting}
import numpy as np
# Plan the task into subtasks
def generate_plan_list():
    # Task decomposition: [verb + noun]
    plan_list = [
        'move to object',
        'grasp object',
        'move to goal',
        'release object'
    ]
    return plan_list

# Determining the current subtask based on environment information
def determination_function(env, plan_list):
    # Decompose the task completion into conditions for each subtask

    # Subtask 1: 'move to object'
    if np.linalg.norm(env.robot.tcp - env.obj.position) > 0.3:
        return 0  # Still moving towards the object

    # Subtask 2: 'grasp object'
    elif env.obj.position[2] <= env.obj.obj_init_pos[2]:
        return 1  # Object not yet grasped

    # Subtask 3: 'move to goal'
    elif np.linalg.norm(env.obj.position - env.goal_position) > 0.07:
        return 2  # Still moving towards the goal

    # Subtask 4: 'release object'
    else:
        return 3  # Task is completed

# Progress computation for each subtask
def progress_function(env):
    plan_list = generate_plan_list()
    subtask_idx = determination_function(env, plan_list)

    # Initialize variables
    main_progress = 0.0  # Overall task progress
    subprogress = 0.0  # Progress of the current subtask

    # Subtask 1: 'move to object'
    if subtask_idx == 0:
        # Progress of moving towards the object
        initial_distance = np.linalg.norm(env.robot.tcp - env.obj.obj_init_pos)
        current_distance = np.linalg.norm(env.robot.tcp - env.obj.position)
        subprogress = 1 - (current_distance / initial_distance)
        main_progress = subtask_idx + subprogress  # Task progress is the sum of subtasks

    # Subtask 2: 'grasp object'
    elif subtask_idx == 1:
        # Progress of grasping the object (object lifted above initial position)
        subprogress = (env.robot.ee_position[2] - env.obj.obj_init_pos[2]) / 0.5  # Normalize based on height
        main_progress = subtask_idx + subprogress

    # Subtask 3: 'move to goal'
    elif subtask_idx == 2:
        # Progress of moving towards the goal position
        initial_distance_to_goal = np.linalg.norm(env.obj.position - env.obj_init_pos)
        current_distance_to_goal = np.linalg.norm(env.obj.position - env.goal_position)
        subprogress = 1 - (current_distance_to_goal / initial_distance_to_goal)
        main_progress = subtask_idx + subprogress

    # Subtask 4: 'release object'
    elif subtask_idx == 3:
        # Task is complete, no further progress is needed
        subprogress = 1.0
        main_progress = 1.0

    return main_progress, subtask_idx
\end{lstlisting}

\end{tcolorbox}

\end{document}